    \def\addlegendimage{\csname pgfplots@addlegendimage\endcsname}
\definecolor{myteal}{RGB}{0,153,136}
\definecolor{myred}{RGB}{204,51,17}
\definecolor{mygreen}{RGB}{204,221,170}
\pgfplotsset{compat=1.10, 
cycle list={%
{draw=myteal,mark=o},
{draw=myred, mark=o}}}
\newtheorem{theorem}{Theorem}
\newtheorem{definition}{Definition}
\newtheorem{Claim}{Claim}
\algnewcommand{\LineComment}[1]{\State \textcolor{blue}{/* #1 */}}
\newcommand{\ouralgo}{\texttt{P-FCB}}
\title{Differentially Private Federated Combinatorial Bandits with Constraints\thanks{A version of this paper has appeared in the Proceedings of the 32nd European Conference on Machine Learning and Principles and Practice of Knowledge Discovery in Databases, 2022 (ECML PKDD '22).}}
\author{
Sambhav Solanki
\and
Samhita Kanaparthy\and
Sankarshan Damle\And
Sujit Gujar
\affiliations
Machine Learning Lab, International Institute of Information Technology (IIIT), Hyderabad, India
\emails
\{sambhav.solanki, s.v.samhita, sankarshan.damle\}@research.iiit.ac.in,
sujit.gujar@iiit.ac.in,
}
\begin{document}

\maketitle

\begin{abstract}
There is a rapid increase in the cooperative learning paradigm in online learning settings, i.e., \emph{federated learning} (FL). Unlike most FL settings, there are many situations where the agents are competitive. Each agent would like to learn from others, but the part of the information it shares for others to learn from could be sensitive; thus, it desires its \emph{privacy}. This work investigates a group of agents working concurrently to solve similar combinatorial bandit problems while maintaining quality constraints. Can these agents collectively learn while keeping their sensitive information confidential by employing differential privacy? We observe that communicating can reduce the \emph{regret}. However, differential privacy techniques for protecting sensitive information makes the data noisy and may deteriorate than help to improve regret. Hence, we note that it is essential to decide \emph{when to communicate} and \emph{what shared data to learn} to strike a functional balance between regret and privacy. For such a federated combinatorial MAB setting, we propose a Privacy-preserving Federated Combinatorial Bandit algorithm, \ouralgo. We illustrate the efficacy of \ouralgo\ through simulations. We further show that our algorithm provides an improvement in terms of regret while upholding quality threshold and meaningful privacy guarantees.

\end{abstract}

\section{Introduction}\label{sec:intro}

A large portion of the manufacturing industry follows the Original Equipment Manufacturer (OEM) model. In this model, companies (or aggregators) that design the product usually procure components required from an available set of OEMs. Foundries like TSMC, UMC, and GlobalFoundries handle the production of components used in a wide range of smart electronic offerings~\cite{foundries}. We also observe a similar trend in the automotive industry~\cite{OEM}.

However, aggregators are required to maintain minimum \emph{quality} assurance for their products while maximizing their revenue. Hence, they must judicially procure the components with desirable quality and cost from the OEMs. For this, aggregators should learn the quality of components provided by an OEM. OEM businesses often have numerous agents engaged in procuring the same or similar components. In such a setting, one can employ \emph{online learning} where multiple aggregators, referred henceforth as \emph{agents}, cooperate to learn the qualities~\cite{FedUCB,Fed2-UCB}. Further, decentralized (or federated) learning is gaining traction for large-scale applications~\cite{google-fedml2,appl-of-fedml2}. 

In general, an agent needs to procure and utilize the components from different OEMs (referred to as \emph{producers}) to learn their quality. This learning is similar to the exploration and exploitation problem, popularly known as \emph{Multi-armed Bandit} (MAB)~\cite{adaptive,shweta}. It needs sequential interactions between sets of producers and the learning agent. Further, we associate qualities, costs, and capacities with the producers for each agent. We model this as a combinatorial multi-armed bandit (CMAB)~\cite{CUMB} problem with assured qualities~\cite{shweta}. Our model allows the agents to maximize their revenues by communicating their history of procurements to have better estimations of the qualities. Since the agents can benefit from sharing their past quality realizations, we consider them engaged in a \textit{federated} learning process. Federated MAB often improves performance in terms of \emph{regret} incurred per agent~\cite{FedCM,PF-UCB}\footnote{Regret is the deviation of utility gained while engaging in learning from the utility gained if the mean qualities were known.}.

Such a federated exploration/exploitation paradigm is not just limited to selecting OEMs. It is useful in many other domains such as stocking warehouse/distribution centres, flow optimization, and product recommendations on e-commerce websites~\cite{supply-chain,silva2022multi}. However, agents are competitive; thus, engaging in federated learning is not straightforward. Agents may not be willing to share their private experiences since that could negatively benefit them. For example, sharing the exact procurement quantities of components specific to certain products can reveal the market/sales projections. Thus, we desire (or many times it is necessary) to maintain privacy when engaged in federated learning. This paper aims to design a privacy-preserving algorithm for federated CMAB with quality assurances.

\smallskip
\noindent\textbf{Our Approach and Contributions.}
Privacy concerns for sensitive information pose a significant barrier to adopting federated learning. To preserve the privacy of such information, we employ the strong notion of \emph{differential privacy} (DP)~\cite{dwork06}. DP has emerged as the standard approach for privacy preservation in the AI/ML literature~\cite{abadi2016deep,papernot2016semi,triastcyn2019federated,ai2,pgibbs,padala2021federated}.
Note that naive approaches (e.g., Laplace or Gaussian Noise Mechanisms~\cite{Dwork}) to achieve DP for CMAB may come at a high privacy cost or outright perform worse than non-federated solutions. Consequently, the primary challenge is carefully designing methods to achieve DP that provide meaningful privacy guarantees while performing significantly better than its non-federated counterpart.

To this end, we introduce \ouralgo, a \underline{P}rivacy-preserving \underline{F}ederated \underline{C}ombinatorial \underline{B}andit algorithm. \ouralgo\ comprises a novel communication algorithm among agents, while each agent is learning the qualities of the producers to cooperate in the learning process. Crucially in \ouralgo, the agent only communicates within a specific time frame -- since it is not beneficial to communicate in (i) earlier rounds (estimates have high error probability) or (ii) later rounds (value added by communicating is minimal). While communicating in each round reduces per agent regret, it results in a high privacy loss. \ouralgo\ strikes an effective balance between learning and privacy loss by limiting the number of rounds in which agents communicate. Moreover, to ensure the privacy of the shared information, the agents add calibrated noise to sanitize the information a priori. \ouralgo\ also uses error bounds generated for UCB exploration~\cite{Auer-2002} to determine if shared information is worth learning. We show that \ouralgo\ allows the agents to minimize their regrets while ensuring strong privacy guarantees through extensive simulations.

In recent times, research has focused on the intersection of MAB and DP~\cite{P2B,DP-MAB}. Unlike \ouralgo, these works have limitations to single-arm selections. To the best of our knowledge, this paper is the first to simultaneously study federated CMAB with assured quality and privacy constraints. In addition, as opposed to other DP and MAB approaches~\cite{FedUCB,Privacy-Preserving}, we consider the sensitivity of attributes specific to a producer-agent set rather than the sensitivity of general observations. In summary, our contributions in this work are as follows:
\begin{enumerate}
    \item We provide a theoretical analysis of improvement in terms of regret in a non-private homogeneous federated CMAB setting (Theorem~\ref{theorem:Homogenous Regret}, Section~\ref{sec:np}).
    \item We show that employing privacy techniques naively is not helpful and has information leak concerns (Claim~\ref{claim}, Section~\ref{sub:infoLeak}).
    \item We introduce \ouralgo\ to employ privacy techniques practically (Algorithm~\ref{algo:Multiplayer Private SS-UCB}). \ouralgo\ includes selecting the information that needs to be perturbed and defining communication rounds to provide strong privacy guarantees. The communicated information is learned selectively by using error bounds around current estimates. Selective communication helps minimize regret.
    \item \ouralgo's improvement in per agent regret even in a private setting compared to individual learning is empirically validated through extensive simulations (Section~\ref{sec:exp}).
\end{enumerate}

\section{Related Work} \label{sec:related}

\textit{Multi-armed bandits} (MAB) and their variants are a well studied class of problems~\cite{Auer-2002,history,shweta,LinUCB,Knowledge,Routine} that tackle the exploration vs. exploitation trade-off in online learning settings. While the classical MAB problem~\cite{Auer-2002,Intro-MAB} assumes single arm pull with stochastic reward generation, our work deals with combinatorial bandits (CMAB)~\cite{CUMB,Radio,Shweta_Sujit,TS-CMAB}, whereby the learning agent pulls a subset of arms. We remark that our single-agent (non-federated) MAB formulation is closely related to the MAB setting considered in~\cite{ayush}, but the authors there do not consider federated learning.

\smallskip
\noindent\textit{Federated MAB.}
Many existing studies address the MAB problem in a federated setting but restrict themselves to single-arm pulls. ~\cite{Fed2-UCB,PF-UCB} considers a federated extension of the stochastic single player MAB problem, while ~\cite{Fed-PE} considers the linear contextual bandit in a federated setting. Kim et al.~\cite{FedCM} specifically considers the federated CMAB setting, but does not address any privacy concerns.

\smallskip
\noindent\textit{Privacy-preserving MAB.}
The authors in~\cite{P2B,DP-MAB} consider a differentially private MAB setting for a single learning agent, while the works in \cite{IOT,CDP-MAB} consider differentially private federated MAB setting. However, these works focus only on the classical MAB setting, emphasising the communication bottlenecks. There also exists works that deal with private and federated setting for the contextual bandit problem~\cite{FedUCB,Privacy-Preserving}. However, they do not consider pulling subsets of arms. Further, Hannun et al.~\cite{Privacy-Preserving} consider privacy over the context, while Dubey and Pentland~\cite{FedUCB} consider privacy over context and rewards. Contrarily, this paper considers privacy over the procurement strategy used.

To the best of our knowledge, we are the first to propose a solution for combinatorial bandits (CMAB) in a federated setting with the associated privacy concerns.

\section{Preliminaries} \label{sec:prm}
In this section, we formally describe the combinatorial multi-armed bandit setting and its federated extension. We also define differential privacy in our context.

\subsection{Federated Combinatorial Multi Armed Bandits} \label{sub:MAB}
We consider a combinatorial MAB (CMAB) setting where there are $[m]$ producers and $[n]$ agents. Each producer $i \in [m]$ has a cost $c_{ij}$ and capacity $k_{ij}$ for every agent $j \in [n]$ interacting with it. At any round $t \in \{1,2,\ldots,T\}$, agents procure some quantity of goods from a subset of producers under given constraint(s). We denote the procurement vector of an agent $j$ by $\mathbf{s}_{j} = (l_{1j}, l_{2j},\ldots,l_{mj})$ where $\ l_{ij} \in [0,k_{ij}]$ is the quantity procured from producer $i$.

\smallskip
\noindent\textit{Qualities.} Each agent observes a quality realisation for each unit it procured from producers. Since the quality of a single unit of good may not be easily identifiable, we characterize it as a Bernoulli random variable. This simulates if a unit was defective or not in the OEMs scenario. The expected realisation of a unit procured from a producer $i$ is referred to as its quality, $q_{i}$. In other words, $q_{i}$ denotes the probability with which a procured unit of good from producer $i$ will have a quality realisation of one. While the producer's cost and capacity vary across agents, the quality values are indifferent based on agents.

\smallskip
\noindent\textit{Regret.}
We use $r_{ij}$ to denote expected utility gain or revenue for the agent $j$ by procuring a single unit from producer $i$, where $r_{ij} = \rho q_{i} - c_{ij}$ (where $\rho>0$, is a proportionality constant). Further, the expected revenue for a procurement vector $\mathbf{s}_{j}$, is given by $r_{\mathbf{s}_{j}} = \sum_{i\in [m]} l_{ij}r_{ij}$.

The goal for the agent is to maximise its revenue, under given constraints. We consider a constraint of maintaining a minimum expected quality threshold $\alpha$ (quality constraint), for our setting. To measure the performance of an a given algorithm $A$, we use the notion of regret which signifies the deviation of the algorithm from the procurement set chosen by an Oracle when mean qualities are known. For any round $t \in \{1,2,\ldots,T\}$, we use the following to denote the regret for agent $j$ given an algorithm $A$,
\begin{align*}
    \mathcal{R}_{Aj}^{t} = \begin{cases}
        r_{\mathbf{s^{*}_{j}}} - r_{\mathbf{s}_{Aj}^{t}}, & \textit{if }s_{Aj}^{t} \textit{ satisfies the quality constraint} \\
        L, & \textit{otherwise}
    \end{cases}
\end{align*}
where $\mathbf{s^{*}_{j}}$ denotes the procurement set chosen by an Oracle, with the mean qualities known. $\mathbf{s}_{A}^{t}$ is the set chosen by the algorithm $A$ in round $t$. $L = \max_{r_{\mathbf{s}}}(r_{\mathbf{s^{*}_{j}}} - r_{\mathbf{s}})$ is a constant that represents the maximum regret one can acquire. The overall regret for algorithm $A$ is given by $\mathcal{R}_{A} = \sum_{j \in [n]} \sum_{t \in [T]} \mathcal{R}_{Aj}^{t}$. 

\smallskip
\noindent\textit{Federated Regret Ratio (FRR).} We introduce FRR to help quantify the reduction in regret brought on by engaging in federated learning. FRR is the ratio of the regret incurred by an agent via a federated learning algorithm $A$ over agent's learning individually via a non-federated algorithm $NF$, i.e., $FRR = \frac{\mathcal{R}_{A}}{\mathcal{R}_{NF}}$.


Observe that, $FRR \approx 1$ indicates that there is not much change in terms of regret by engaging in federated learning. If $FRR > 1$, it is detrimental to engage in federated learning, whereas if $FRR < 1$, it indicates a reduction in regret. When $FRR\approx 0$, there is almost complete reduction of regret in federated learning.

\smallskip
In our setting, we consider that agents communicate with each other to improve their regret. But in general, agents often engage in a competitive setting, and revealing true procurement values can negatively impact them. For instance, knowing that a company has been procuring less than their history can reveal their strategic plans, devalue their market capital, hinder negotiations etc. We give a formalisation of the notion of privacy used in our setting in the next subsection.

\begin{figure}[t]
    \centering
    \includegraphics[width=0.5\textwidth]{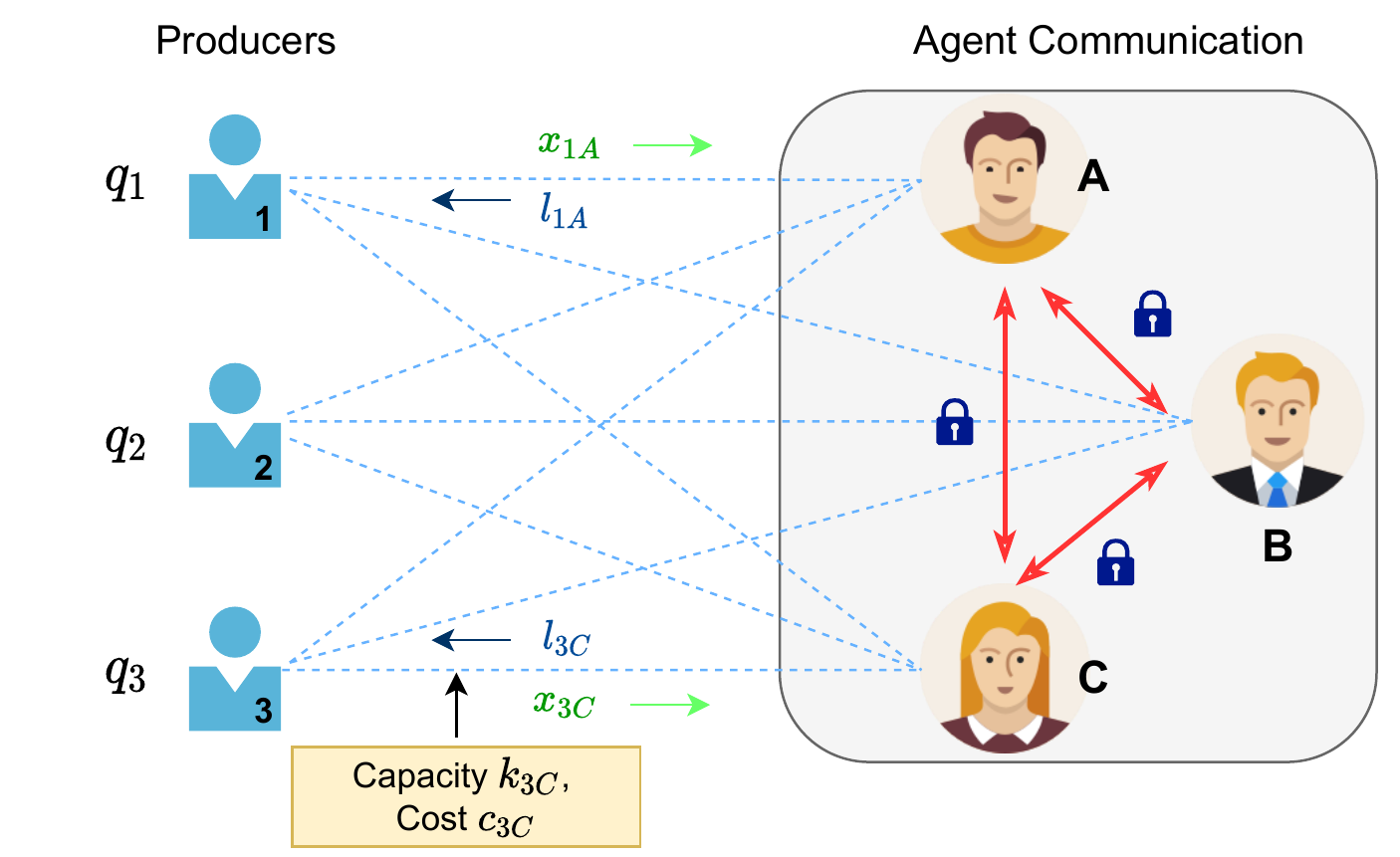}
    \caption{Overview of the communication model for \ouralgo: Agents interact with producers as part of the exploration and exploitation process. Agents also communicate among themselves to learn the qualities of producers. However, they share noisy data to maintain the privacy of their sensitive information.}
    \label{fig:my_model}
\end{figure}

\subsection{Differential Privacy (DP)} \label{sub:DP}
As opposed to typical federated models, we assume that the agents in our setting may be competing. Thus, agents will prefer the preservation of their sensitive information. Specifically, consider the history of procurement quantities $\mathbf{H}_{ij} = (l_{ij}^{t})_{t \in [T]}$ for any producer $i \in [m]$ is private to agent $j$. To preserve the privacy of $\mathbf{H}_{ij}$ while having meaningful utilitarian gains, we use the concept of Differential Privacy (DP). We tweak the standard DP definition in~\cite{dwork06,Dwork} for our setting. For this, let $\mathbf{S}_{j} = (\mathbf{s}_{j}^{t})_{t \in [T]}$ be complete history of procurement vectors for agent $j$.
\begin{definition}[Differential Privacy] \label{FDP}
     In a federated setting with $n \geq 2$ agents, a combinatorial MAB algorithm $A = (A_{j})_{j=1}^{n}$ is said to be $(\epsilon,\delta,n)-$differentially private if for any $u,v \in [n], s.t., u \neq v$, any $t_{o}$, any set of adjacent histories $\mathbf{H}_{iu} = (l_{iu}^{t})_{t \in [T]}, \mathbf{H}_{iu}^{'} = (l_{iu}^{t})_{t \in [T] \setminus \{t_{o}\}} \ \cup \ \Bar{l}_{iu}^{t_{o}}$ for producer $i$ and any complete history of procurement vector $\mathbf{S}_{v}$, 
    \begin{align*}
        \Pr(A_{v}(\mathbf{H}_{iu}) \in \mathbf{S}_{v}) \leq e^{\epsilon}\Pr(A_{v}(\mathbf{H}_{iu}^{'}) \in \mathbf{S}_{v}) + \delta 
    \end{align*}
\end{definition}

Our concept of DP in a federated CMAB formalizes the idea that the selection of procurement vectors by an agent is insusceptible to any single element $l_{ij}^{t}$ from another agent's procurement history. Note that the agents are not insusceptible to their own histories here.

Typically, the ``$\epsilon$" parameter is referred to as the \emph{privacy budget}. The \emph{privacy loss} variable $\mathcal{L}$ is often useful for the analysis of DP. More formally, given a randomised mechanism $\mathcal{M}(\cdot)$ and for any output $o$, the privacy loss variable is defined as,
 
 \begin{equation}\label{eqn::PL}
     \mathcal{L}^o_{\mathcal{M}(\mathbf{H})||\mathcal{M}(\mathbf{H}')} = \ln\left(\frac{\Pr[\mathcal{M}(\mathbf{H})=o]}{\Pr[\mathcal{M}(\mathbf{H}')=o]}\right).
 \end{equation}

\smallskip
\noindent\textit{Gaussian Noise Mechanism}~\cite{Dwork}.
To ensure DP, often standard techniques of adding noise to values to be communicated are used. The Gaussian Noise mechanism is a popular mechanism for the same. Formally, a randomised mechanism $\mathcal{M}(x)$ satisfies $(\epsilon,\delta)$-DP if the agent communicates $\mathcal{M}(x) \triangleq x + \mathcal{N}\left(0,\frac{2 \Delta(x)^{2} \ln(1.25 / \delta)}{\epsilon^{2}}\right) $. Here, $x$ is the private value to be communicated with \textit{sensitivity} $\Delta(x)$, and $\mathcal{N}(0,\sigma^2)$ the Gaussian distribution with mean zero and variance $\sigma^2$.


In summary, Figure~\ref{fig:my_model} provides an overview of the model considered. Recall that we aim to design a differentially private algorithm for federated CMAB with assured qualities. Before this, we first highlight the improvement in regret using the federated learning paradigm. Next, we discuss our private algorithm, \ouralgo, in Section \ref{sec:priv}.


\section{Non-private Federated Combinatorial Multi-armed Bandits} \label{sec:np}
\input{Ratio_gaussian}
We now demonstrate the advantage of federated learning in CMAB by highlighting the reduction in regret incurred compared to agents learning individually. We first categorize Federated CMAB into the following two settings: (i) \textit{homogeneous}: where the capacities and costs for producers are the same across agents, and (ii) \textit{heterogeneous}: where the producer's capacity and cost varies depending on the agent interacting with them.

\smallskip
\noindent\textbf{Homogeneous Setting.}
The core idea for single-agent learning in CMAB involves using standard $UCB$ exploration~\cite{Auer-2002}. We consider an Oracle that uses the $UCB$ estimates to return an optimal selection subset. In this paper, we propose that to accelerate the learning process and for getting \textit{tighter} error bound for quality estimations, the agents communicate their observations with each other in every round. In a homogeneous setting, this allows all agents to train a shared model locally without a central planner since the Oracle algorithm is considered deterministic. It's important to note that in such a setting, each agent has the same procurement history and the same expected regret.

Further, the quality constraint guarantees for the federated case follow trivially from the single agent case \cite[Theorem 2]{ayush}. Additionally, in Theorem~\ref{theorem:Homogenous Regret}, we prove that the upper bound for regret incurred by each agent is $\mathcal{O}(\frac{\ln (nT)}{n})$; a significant improvement over $\mathcal{O}(\ln T)$ regret the agent will incur when playing individually. \\

The proof entails breaking down the regret into three terms: the regret accumulated during the initial $\tau$ rounds, the regret caused by failing to meet quality constraints, and the regret resulting from sub-optimal arm selection. While the first two terms can be easily bounded, we address the last term by establishing an upper bound on the number of rounds in which sub-optimal arms are pulled, with a high probability. \\

\begin{algorithm} 
\small
    \caption{FCB} \label{Multiplayer SS-UCB}
        \begin{algorithmic}[1]
            \State \textbf{Inputs :} Total rounds $T$, Quality threshold $\alpha$,  $\epsilon$, $\delta$, Cost set $\{\mathbf{c}\}=\{(c_{i})_{i\in [m]}\}$, Capacity set $\{\mathbf{k}\}=\{(k_{i})_{i\in [m]}\}$,
            \State $\forall j \in [n]$ Initialise $W_{i}$ (Total units procured from producer $i$) and $\hat{q}_{i}$ (quality estimate for producer $i$)
            \While{$t \leq \frac{3 ln (yt)}{2 n \epsilon_2^2}$ \textbf{(Explore Phase)}}
                \For{ each agent $j \in [n]$}
                    \State Pick a procurement vector $\mathbf{s}^{t} = (1)^{m}$
                    \State Observe quality realisations $\mathbf{X}_{\mathbf{s}^{t},j}^{t}$
                    \State (\textbf{Synchronise}) Communicate $\mathbf{X}_{\mathbf{s}^{t},j}^{t}$ to all other agents
                    \State $[\forall i \in [m]]$ $\hat{q}_{i} \longleftarrow \frac{\hat{q}_{i}W_{i} + \sum_{j \in [n]} x_{ij}^{t}}{W_{i} + n}$
                    \State $[\forall i \in [m]]$ $W_{i} \longleftarrow W_{i} + n$
                \EndFor
                \State $t \longleftarrow t+1$
            \EndWhile
            \While{$t \leq T$ \textbf{(Explore-Exploit Phase)}}
                \For{ each agent $j \in [n]$}
                    \State $[\forall i \in [m]]$ $(\hat{q}_{i})^{+} = \hat{q}_{i} + \sqrt{\frac{3 ln (nt)}{2nW_{i}}}$
                    \State Pick a procurement vector $\mathbf{s}^{t} = Oracle(\{(\hat{q}_{i})^{+}\}_{i \in [m]}, $
                    \State $\mathbf{c},,\alpha + \gamma, R)$
                    \State Observe quality realisations $\mathbf{X}_{\mathbf{s}^{t},j}^{t}$
                    \State (\textbf{Synchronise}) Communicate $\mathbf{X}_{\mathbf{s}^{t},j}^{t}$ to all other agents
                    \State $[\forall i \in [m]]$ $\hat{q}_{i} \longleftarrow \frac{\hat{q}_{i}W_{i} + \sum_{j \in [n]} x_{ij}^{t}}{W_{i} + n l_{i}}$
                    \State $[\forall i \in [m]]$ $W_{i} \longleftarrow W_{i} + n l_{i}$
                \EndFor
            \EndWhile
        \end{algorithmic}
        \label{m-a SS-UCB}
\end{algorithm}

\begin{theorem} \label{theorem:Homogenous Regret}
    For Federated CMAB in a homogeneous setting with $n$ agents, if the qualities of producers satisfy $\gamma$-seperatedness, then the individual regret incurred by each of the agents is bounded by $\mathcal{O}(\frac{\ln (nT)}{n})$.
\end{theorem}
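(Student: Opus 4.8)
By homogeneity every agent shares the same procurement history and hence incurs the same per‑round regret, so it suffices to bound $\mathcal{R}=\sum_{t=1}^{T}\mathcal{R}^{t}$ for one representative agent and then observe that the bound holds verbatim for each. Following the three‑way split announced above, I would write $\mathcal{R}\le \mathcal{R}_{\mathrm{exp}}+\mathcal{R}_{\mathrm{feas}}+\mathcal{R}_{\mathrm{sub}}$, where $\mathcal{R}_{\mathrm{exp}}$ collects the rounds $t\le\tau$ of the Explore phase, $\mathcal{R}_{\mathrm{feas}}$ the post‑$\tau$ rounds in which the Oracle returns a set violating the quality constraint, and $\mathcal{R}_{\mathrm{sub}}$ the remaining (feasible but sub‑optimal) rounds; every round contributes at most the constant $L=\max_{\mathbf{s}}(r_{\mathbf{s}^{*}_{j}}-r_{\mathbf{s}})$.

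\textbf{The two easy terms.} The Explore phase lasts $\tau=\mathcal{O}\!\left(\frac{\ln(nT)}{n\epsilon_{2}^{2}}\right)$ rounds, so $\mathcal{R}_{\mathrm{exp}}\le L\,\tau=\mathcal{O}\!\left(\frac{\ln(nT)}{n}\right)$ directly. For $\mathcal{R}_{\mathrm{feas}}$, note that after $t$ rounds each producer $i$ has been sampled $W_{i}=\Omega(nt)$ times, so Hoeffding's inequality gives $\Pr\!\big[\,|\hat q_{i}-q_{i}|>\sqrt{3\ln(nt)/(2nW_{i})}\,\big]\le 2(nt)^{-2}$, and a union bound over the $m$ producers yields a clean event $\mathcal{G}^{t}$ of probability at least $1-2m(nt)^{-2}$ on which each $(\hat q_{i})^{+}$ over‑estimates $q_{i}$ by at most twice its confidence width. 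Because the Oracle is queried with the inflated threshold $\alpha+\gamma$, on $\mathcal{G}^{t}$ (once $t>\tau$, which forces the confidence width below $\gamma/2$) any returned set has true quality at least $\alpha$; hence the constraint can be violated only when $\mathcal{G}^{t}$ fails, and $\mathcal{R}_{\mathrm{feas}}\le L\sum_{t>\tau}\Pr[\neg\mathcal{G}^{t}]=\mathcal{O}(1)$. The quality‑guarantee part of the statement then follows exactly as in \cite[Theorem~2]{ayush}.

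\textbf{The crux: sub‑optimal selections.} On $\mathcal{G}^{t}$, if the returned set $\mathbf{s}^{t}$ is feasible but $r_{\mathbf{s}^{*}_{j}}-r_{\mathbf{s}^{t}}>0$, then by optimality of the Oracle on the optimistic estimates together with $\gamma$‑separatedness there must be a producer $i$ in the symmetric difference of $\mathbf{s}^{t}$ and $\mathbf{s}^{*}_{j}$ whose confidence width still exceeds a quantity proportional to the reward gap, i.e.\ $W_{i}<c\,\ln(nt)/(n\Delta_{i}^{2})$ for a suitable constant $c$ and gap parameter $\Delta_{i}$. Each such round increases $W_{i}$ by at least $n$ (one unit contributed by every agent), so a fixed producer $i$ can trigger a sub‑optimal round at most $\mathcal{O}\!\left(\ln(nT)/(n\Delta_{i}^{2})\right)$ times; summing over the $m$ producers, the number of feasible‑but‑sub‑optimal rounds is $\mathcal{O}\!\left(\sum_{i}\ln(nT)/(n\Delta_{i}^{2})\right)=\mathcal{O}\!\left(\ln(nT)/n\right)$ when $m$ and $\min_{i}\Delta_{i}$ are treated as constants, whence $\mathcal{R}_{\mathrm{sub}}=\mathcal{O}\!\left(\ln(nT)/n\right)$.

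\textbf{Conclusion and main obstacle.} Adding the three bounds gives $\mathcal{R}=\mathcal{O}(\ln(nT)/n)$ for the representative agent, and by homogeneity this is the individual regret of every agent, which also compares favourably with the $\mathcal{O}(\ln T)$ incurred by an agent acting alone. I expect the genuine difficulty to lie entirely in the sub‑optimal‑selection step: unlike the classical single‑arm UCB argument, one must argue in the combinatorial, quality‑constrained setting that a sub‑optimal Oracle output \emph{certifies} an under‑explored producer in the symmetric difference — this is precisely where $\gamma$‑separatedness and the structural properties of the Oracle (monotonicity and its behaviour under the inflated threshold $\alpha+\gamma$) are needed — and one must check that a rarely‑selected producer keeps receiving a UCB bonus $\sqrt{3\ln(nt)/(2nW_{i})}$ large enough to force its eventual selection, so that the per‑producer count above is valid.
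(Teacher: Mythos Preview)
Your three-way decomposition and the treatment of $\mathcal{R}_{\mathrm{exp}}$ and $\mathcal{R}_{\mathrm{feas}}$ match the paper's proof closely, and you have correctly identified the sub-optimal selection term as the crux. For that term the paper does not argue via symmetric differences; it imports the counter technique of \cite{CUMB}: introduce counters $Z_i$, and in each bad round increment the counter of the \emph{selected} arm $i$ (i.e.\ $l_i^t\neq 0$) with the smallest current counter, then show that on the clean event $E^t$ a bad round is impossible once every arm with $l_i^t\neq 0$ satisfies $W_i^{t-1}>\gamma_t=\frac{6\ln(nt)}{n\,(f^{-1}(\Delta_{\min}))^2}$, using monotonicity and bounded smoothness of the reward through the function $f$. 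Since $W_i\ge Z_i$ always, this yields $\sum_i Z_i^T\le m(\gamma_T+1)+\mathcal{O}(1)$, and the $n$ in the denominator of $\gamma_T$ is exactly what delivers the $\ln(nT)/n$ rate.

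Your sketch of this step has a genuine gap at the point you yourself flag. The optimism chain on $\mathcal{G}^t$ gives $r_{\mathbf{s}^*}-r_{\mathbf{s}^t}\le f\bigl(2\max_{i:\,l_i^t\neq 0}\Lambda_i^t\bigr)$, so the under-explored arm it certifies lies in the \emph{selected} set $\mathbf{s}^t$, not in the symmetric difference of $\mathbf{s}^t$ and $\mathbf{s}^*_j$. This matters for your counting: your phrasing allows the under-explored arm to sit in $\mathbf{s}^*_j\setminus\mathbf{s}^t$, in which case $W_i$ does not increase that round and the claim ``each such round increases $W_i$ by at least $n$'' fails. Even after replacing ``symmetric difference'' by ``$\mathbf{s}^t$'', you still need a device that assigns exactly one blamed arm per bad round so that the per-arm bounds sum to a bound on the number of bad rounds; this is precisely what the counters $Z_i$ (with the invariant $Z_i\le W_i$) provide in the paper. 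With those two corrections your argument becomes the paper's proof.
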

\begin{proof}
For the rest of the proof, we consider any arbitrary agent $j \in [n]$ and omit explicit denotation. \\
\begin{align*}
    \mathcal{R}^{t}_{A} & = \sum_{t=1}^{\tau - 1} \mathcal{R}^{t}_{A} + \sum_{t=\tau}^{T} \mathcal{R}^{t}_{A}  \\
    & \leq L\tau + \sum_{t=\tau}^{T} \mathcal{R}^t_{A}
\end{align*}
\begin{align*}
    E[\mathcal{R}^t_{A}] & \leq L\tau + \sum_{t \geq \tau} [(1-\sigma)(r_{\mathbf{s}^{*}} - \\
    & r_{\mathbf{s}_{A}^{t}}) + \sigma L]
\end{align*}
Let,
\begin{align*}
    \mathcal{R}_{u}^{T}  = \sum_{t \leq \tau} (1-\sigma)(r_{\mathbf{s}^{*}} - r_{\mathbf{s}_{A}^{t}})
\end{align*}
Here, $\sigma$ is the probability with which quality constraint is satisfied.
Some Additional Notations: \\
\begin{enumerate}
    \item $V^T$ : Number of times a sub-optimal procurement vector is chosen.
    \item $F^t$ : Event that Oracle failed to produce $\omega$-approximation solution.
    \item $W_{i}^{t}$ : Total units procured from $i$ till round $t$. 
    \item $S_b$ denotes the set of bad procurement vectors.
    \item $k = argmax_{i \in [m]}\ k_i$, represents the max capacity amongst all arms.
    \item $\Delta_{min}^i = \omega r_{\mathbf{s}^{*}} - \max \{r_{\mathbf{s}}, \mathbf{s} \in S_b, l_{i} \neq 0 \}$.
    \item $\Delta_{min} =\min_{i \in [m]} \Delta_{min}^i$
    \item $\Delta_{max}^i = \omega r_{\mathbf{s}^{*}} - \min \{r_{\mathbf{s}}, \mathbf{s} \in S_b, l_{i} \neq 0 \}$.
\end{enumerate}
\smallskip
We can see that,
\begin{align}
    E[\mathcal{R}_{u}^{T}] \leq E[V_{T}] \Delta_{max}
\end{align}
\textbf{Bounding number of round in which sub-optimal procurement vector are chosen} \\
We can use a proof sketch similar to the proof provided in \cite{CUMB} to tightly bound $V^{T}$. Let each arm $i$ have a counter $Z_i$ associated with it. $Z_{i}^{t}$ represents the value of $Z_i$ after $t$ rounds. \\
Counters $\{Z_{i}\}_{i \in [m]}$ are updated as follows,
\begin{enumerate}
    \item After initial $m$ rounds, $\sum_i Z_i^m = m$.
    \item For round $t>m$, let $\mathbf{s}^t$ be the selected procurement vector in round $t$. We say round $t$ is bad if oracle selects a bad arm.
    \item For a bad round, we increase one of the counters. Let $j = \textit{argmin}_{i \in [m], l_{i}^{t} \neq 0}\ Z_{i}^{t-1}$, then $Z_{j}^{t} = Z_{j}^{t-1} + 1$ (If multiple counters have min value, select $i$ randomly from the set).
\end{enumerate}
Total number of bad rounds in first $p$ rounds is less than or equal to $\sum_i Z_{i}^{p}$. \\ \\
Let $\gamma_t  =\frac{6 \log(nt)}{n (f^{-}(\Delta_{min}))^2}$,
\begin{align}
    &\sum_{i=1}^{m} Z_{i}^{p} - m(\gamma_p + 1) \nonumber \\ 
    &= \sum_{t=m+1}^{p} \mathbb{I}\{\mathbf{s}^t \in S_b\} - m \gamma_{p} \nonumber \\ 
    &\leq \sum_{t=m+1}^{p} \sum_{i=1}^{m} \mathbb{I}\{\mathbf{s}^t \in S_b, Z_{i}^{t} > Z_{i}^{t-1}, Z_{i}^{t-1} > \gamma_p\} \nonumber \\
    &\leq \sum_{t=m+1}^{p} \sum_{i=1}^{m} \mathbb{I}\{\mathbf{s}^t \in S_b, Z_{i}^{t} > Z_{i}^{t-1}, Z_{i}^{t-1} > \gamma_t\}  \nonumber\\
    &= \sum_{t=m+1}^{p} \mathbb{I}\{\mathbf{s}^t \in S_b, \forall i\ s.t.\ l_{i}^{t} \neq 0, Z_{i}^{t-1} > \gamma_t\} \label{eq:prf 1} \\  
    &\leq \sum_{t=m+1}^{p} \mathbb{I}\{F^t\} + \mathbb{I}\{\neg F^t, \mathbf{s}^t \in S_b, \forall i\ s.t.\ l_{i}^{t} \neq 0, Z_{i}^{t-1} > \gamma_t\} \nonumber \\
    &\leq \sum_{t=m+1}^{p} \mathbb{I}\{F^t\} + \mathbb{I}\{\neg F^t, \mathbf{s}^t \in S_b, \forall i\ s.t.\ l_{i}^{t} \neq 0, W_{i}^{t-1} > \gamma_t\} \nonumber 
\end{align}
Eq. \eqref{eq:prf 1} holds due to the rule of updating the counters. \\ \\
Now we first claim that $Pr\{\neg F^t,\mathbf{s}^t \in S_b, \forall i\ s.t.\ l_{i}^{t} \neq 0, W_{i}^{t-1} > \gamma_t\} \leq 2 k n^{-3} t^{-2}$. \\ \\ 
For any $i \in [m]$,
\begin{align}
    &Pr[\lvert \hat{q}_{i,W_{i}^{t-1}} - q_{i} \rvert \geq \sqrt{\frac{3 \log(nt)}{2nW_{i}^{t-1}}}] \nonumber \\
    &= \sum_{b=1}^{k(t-1)} Pr[\lvert \hat{q}_{i,W_{i}^{t-1}} - q_{i} \rvert \geq \sqrt{\frac{3 \log(nt)}{2nb}}, b = W_{i}^{t-1}] \nonumber \\
    &\leq \sum_{b=1}^{k(t-1)} Pr[\lvert \hat{q}_{i,W_{i}^{t-1}} - q_{i} \rvert\geq \sqrt{\frac{3 \log(nt)}{2nb}}] \nonumber \\
    & \leq \sum_{b=1}^{k(t-1)} 2e^{-2(ns)(\frac{3 \log(nt)}{2nb})} \label{eq:prf 2} \\
    &= 2kn^{-3}t^{-2} \nonumber
\end{align}
Eq. \ref{eq:prf 2} holds due to Hoeffding inequality. \\

\noindent Let $\Lambda_{i}^{t} = \sqrt{\frac{3 \log(nt)}{2nW_{i}^{t-1}}}$. \\ 

\noindent Let $E^t = \{\forall i \in [m], \lvert \hat{q}_{i,W_{i}^{t-1}} - q_{i} \rvert\leq \Lambda_{i}^{t} \}$ be an event. Then by union bound on Eq. \ref{eq:prf 2}, $Pr[\neg E^t] \leq 2kn^{-3}t^{-2}$. Also, since $\lvert ({q}_{i}^{t})^{+} - \hat{q}_{i}^{t} \rvert = \Lambda_{i,t} $, that means, $E^t \implies ({q}_{i}^{t})^{+} \geq q_{i}^{t}, \forall i \in [m]$. \\ \\
Let $\Lambda = \sqrt{\frac{3 \log(nt)}{2n\gamma_t}}$ and $\Lambda^{t} = \max_{i \in [m]} \Lambda_{i}^{t}$.
\begin{align}
E^t &\implies \lvert ({q}_{i}^{t})^{+} - q_{i}^{t} \rvert \leq 2 \Lambda^{t} \label{eq:prf 3}
\end{align}
\begin{align}
    \{\mathbf{s}^t \in S_b, \forall i\ s.t.\ l_{i}^{t} \neq 0, W_{i}^{t-1} > \gamma_t \} &\implies \Lambda > \Lambda^t \label{eq:prf 4}
\end{align}
If $\{E^t,\neg F^t, \mathbf{s}^t \in S_b, \forall i\ s.t.\ l_{i}^{t} \neq 0, W_{i}^{t-1} > \gamma_t \}$ holds true, then using Eq. \ref{eq:prf 3}, Eq. \ref{eq:prf 4}, monotonicity of rewards and bounded smoothness property,
\begin{align}
    r_{\mathbf{s}^{t}} + f(2\Lambda) > \omega r_{\mathbf{s}^{*}} \label{eq:prf 5}
\end{align}

\noindent Since $\gamma_t  =\frac{6 \log(nt)}{n (f^{-}(\Delta_{min}))^2}$, $f(2\Lambda) = \Delta_{min}$. This is contradictory to definition of $\Delta_{min}$.
\begin{align}
    &Pr\{E^t,\neg F^t, \mathbf{s}^t \in S_b, \forall i\ s.t.\ l_{i}^{t} \neq 0, W_{i}^{t-1} > \gamma_t\} = 0 \nonumber \\
    &\implies \{\neg F^t, \mathbf{s}^t \in S_b, \forall i\ s.t.\ l_{i}^{t} \neq 0, W_{i}^{t-1} > \gamma_t\} \nonumber\\
    &\leq Pr[\neg E^t] \leq 2kn^{-3}t^{-2} \nonumber
\end{align}
Thus,
\begin{align}
    &E[\sum_i^m Z_{i}^{p}] \leq m(\gamma_p + 1) + (1-\beta)(p-m) + \sum_{t=1}^{p}2mkn^{-3}t^{-2} \nonumber \\
    &\leq \frac{6m\log(nt)}{n(f^{-1}(\Delta_{min}))^2} + (\frac{\pi^2}{3} + 1)mkn^{-3} + (1-\beta)(p-m) \label{eq:prf 6}
\end{align}
\textbf{Bounding Regret}
Using Eq. \ref{eq:prf 6} and using the fact that $\beta = 0$
\begin{align*}
    E[\mathcal{R}_{u}^{T}] &= E[V^T] \Delta_{max} \\
    &\leq \left( \frac{\pi^2}{3}kn^{-3} + \frac{6\log(nt)}{n(\frac{\Delta_{min}}{R})^2}\right) m\Delta_{max}
\end{align*}

This completes our proof for regret bound in a homogeneous federated setting.

\end{proof}

\smallskip
\noindent\textbf{Heterogeneous Setting.} 
In real-world, the agents may not always have the same capacities. For such a heterogeneous setting, the regret analysis is analytically challenging. For instance, we can no longer directly use Hoeffding's inequality, needed for proving Theorem~\ref{theorem:Homogenous Regret}, since the procurement histories will differ across agents. Still, the intuition for regret reduction from cooperative learning carries over.

Even in a heterogeneous setting, communicating the observations allows the agent to converge their quality estimations to the mean faster and provide tighter error bounds. Even with shared quality estimates, Oracle may return different procurement vectors for different agents based on different capacities. Thus, a weighted update in estimation is essential, and the procurement vector would also need to be communicated.

\smallskip
We empirically demonstrate that using federated learning in heterogeneous setting shows similar $FRR$ (ratio of regret incurred in federated setting compared to non federated setting) trend compared to homogeneous setting, over $100000$ rounds for two scenarios: (i) Costs and qualities are sampled from uniform distributions, i.e. $c_{ij} \sim U[0,1]$, $q_{i} \sim U[0,1]$, (ii) Costs and qualities are sampled from normal distributions around the quality threshold, i.e., $c_{ij} \sim \mathcal{N}(\alpha,0.1)$, $q_{i} \sim \mathcal{N}(\alpha,0.1)$. 

Fig.~\ref{fig:ratio} depicts the results. From Fig.~\ref{fig:ratio} we observe that the trend for both homogeneous and heterogeneous settings are quite similar. This shows that, similar to the homogeneous setting, employing federated learning reduces regret even in the heterogeneous setting.



\section{\ouralgo: Privacy-preserving Federated Combinatorial Bandit} \label{sec:priv}
From Section~\ref{sub:DP}, recall that we identify the procurement history of an agent-producer pair as the agent's sensitive information. We believe that the notion of DP w.r.t. the agent-producer procurement history is reasonable. A differentially private solution ensures that the probability with which other agents can distinguish between an agent's adjacent procurement histories is upper bounded by the privacy budget $\epsilon$.

\smallskip
\noindent\underline{Section Outline}: In this section, we first argue that naive approaches for DP are not suitable due to their lack of meaningful privacy guarantees. Second, we show that all attributes dependent on the sensitive attribute must be sanitised before sharing to preserve privacy. Third, we define a privacy budget algorithm scheme. Fourth, we formally introduce \ouralgo\ including a selective learning procedure. Last, we provide the $(\epsilon,\delta)$-DP guarantees for \ouralgo.

\subsection{Privacy budget and Regret Trade-off}\label{sub:tradeoff}
Additive noise mechanism (e.g., Gaussian Noise mechanism~\cite{Dwork}) is a popular technique for ensuring $(\epsilon,\delta)$-DP. To protect the privacy of an agent's procurement history within the DP framework, we can build a naive algorithm for heterogeneous federated CMAB setting by adding noise to the elements of the procurement vectors being communicated in each round. 

However, such a naive approach does not suitably satisfy our privacy needs. Using the Basic Composition theorem \cite{Dwork}, which adds the $\epsilon$s and $\delta$s across queries, it is intuitive to see that communicating in every round results in a high overall $\epsilon$ value which may not render much privacy protection in practice~\cite{triastcyn2019federated}. Consider the agents interacting with the producers for $10^6$ rounds. Let $\epsilon = 10^{-2}$ for each round they communicate the perturbed values. Using Basic Composition, we can see that the overall privacy budget will be bounded by $\epsilon = 10^4$, which is practically not acceptable. The privacy loss in terms of overall $\epsilon$ grows at worst linearly with the number of rounds. 

It is also infeasible to solve this problem merely by adding more noise (reducing $\epsilon$ per round) since if the communicated values are too noisy, they can negatively affect the estimates. This will result in the overall regret increasing to a degree that it may be better to not cooperatively learn. To overcome this challenge, we propose to decrease the number of rounds in which agents communicate information. 

Secondly, if the sample size for the local estimates is too small, noise addition can negatively effect the regret incurred. On the other hand, if the sample size of local estimate is too large, the local estimate will have tight error bounds and deviating from the local estimate too much may result in the same. 

\smallskip
\noindent{\textbf{When to Learn.}} Based on the above observations, we propose the following techniques to strike an effective trade-off between the privacy budget and regret.
\begin{enumerate}
    \item To limit the growth of $\epsilon$ over rounds, we propose that communication happens only when the current round number is equal to a certain threshold (denoted by $\tau$) which doubles in each communication round. Thus, there are only $\log(T)$ communications rounds, where density of communication rounds decrease over rounds.
    \item We propose to communicate only for a specific interval of rounds, i.e., for each round $t \in [\underline{t},\Bar{t}]$. \textit{No} communication occurs outside these rounds. This ensures that agent communication only happens in rounds when it is useful and not detrimental.

\end{enumerate}

\subsection{Additional Information Leak with Actual Quality Estimates and Noisy Weights}\label{sub:infoLeak}
It is also important to carefully evaluate the way data is communicated every round since it may lead to privacy leaks. For example, consider that all agents communicate their local estimates of the producer qualities and perturbation of the total number of units procured from each producer to arrive at the estimation.  We now formally analyse the additional information leak in this case. W.l.o.g. our analysis is for any arbitrarily picked producer $i \in [m]$ and agent $j \in [n]$. As such, we omit the subscripts ``$i$" for producer and ``$j$" for the agent. We first set up the required notations as follows.

\smallskip
\noindent\underline{Notations}: Consider $\hat{q}^t, W^t$ as \textit{true} values for the empirical estimate of quality and total quantity procured till the round $t$ (not including $t$). Next, let $\Tilde{W}^t$ denote \textit{noisy} value of $W^t$ (with the noise added using any additive noise mechanism for DP~\cite{Dwork}). We have $w^t$ as the quantity procured in round $t$. Last, let $\hat{q}^{obsv_t}$ denote the quality estimate based on just round $t$. Through these notations, we can compute $\hat{q}^{t+1}$ for the successive round $t+1$ as follows: $\hat{q}^{t+1} = \frac{W^t \times \hat{q}^t + w^t \times \hat{q}^{obsv_t}}{W^t + w^t}$.

\begin{Claim}\label{claim}
Given  $\hat{q}^t, W^t, \Tilde{W}^t, w^t$ and $\hat{q}^{obsv_t}$, the privacy loss variable $\mathcal{L}$ is not defined if $\hat{q}^t$ is also not perturbed.
\end{Claim}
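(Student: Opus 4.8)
The plan is to exhibit a positive-probability output of the hypothesised communication mechanism on which the denominator of the privacy loss variable in Eq.~\eqref{eqn::PL} is zero, so that $\mathcal{L}$ fails to be defined. Recall the scheme under scrutiny: each agent broadcasts, in a given round, the \emph{unperturbed} quality estimate $\hat{q}^{t}$ together with the noised total $\Tilde{W}^{t}$. I will argue that the released transcript nonetheless determines the \emph{true} $W^{t}$ exactly, which forces the output distributions of two adjacent histories to have disjoint supports.

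First, I would invert the quality update. From $\hat{q}^{t+1} = \frac{W^{t}\hat{q}^{t} + w^{t}\hat{q}^{obsv_t}}{W^{t} + w^{t}}$, clearing denominators and collecting the $W^{t}$ terms gives $W^{t}(\hat{q}^{t+1}-\hat{q}^{t}) = w^{t}(\hat{q}^{obsv_t}-\hat{q}^{t+1})$, hence
\[
  W^{t} \;=\; w^{t}\cdot\frac{\hat{q}^{obsv_t}-\hat{q}^{t+1}}{\hat{q}^{t+1}-\hat{q}^{t}},
\]
valid whenever $\hat{q}^{t+1}\neq\hat{q}^{t}$. Since $\hat{q}^{t}$ and $\hat{q}^{t+1}$ are released without noise and $w^{t},\hat{q}^{obsv_t}$ are the remaining quantities named in the statement, this exhibits $W^{t}$ as a deterministic function $g(\cdot)$ of quantities the transcript pins down; the additive noise producing $\Tilde{W}^{t}$ is irrelevant to recovering $W^{t}$. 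The degenerate case $\hat{q}^{t+1}=\hat{q}^{t}$ forces $\hat{q}^{obsv_t}=\hat{q}^{t}$ and has probability zero, so it suffices that \emph{some} attainable output has $\hat{q}^{t+1}\neq\hat{q}^{t}$.

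Next, take adjacent procurement histories $\mathbf{H}$ and $\mathbf{H}'$ as in Definition~\ref{FDP}, differing only at a single round $t_o<t$ with $l^{t_o}\neq\bar{l}^{t_o}$; then $W^{t}(\mathbf{H}') = W^{t}(\mathbf{H}) - l^{t_o} + \bar{l}^{t_o}\neq W^{t}(\mathbf{H})$. Fix any output $o$ in the support of the mechanism on $\mathbf{H}$ with $\hat{q}^{t+1}\neq\hat{q}^{t}$; by the previous step $g(o)=W^{t}(\mathbf{H})$. But, off the probability-zero degenerate locus, every output of the mechanism on $\mathbf{H}'$ satisfies $g(\cdot)=W^{t}(\mathbf{H}')\neq W^{t}(\mathbf{H})$, so $o$ lies outside its support, i.e.\ $\Pr[\mathcal{M}(\mathbf{H}')=o]=0$ while $c:=\Pr[\mathcal{M}(\mathbf{H})=o]>0$. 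Plugging into Eq.~\eqref{eqn::PL} gives $\mathcal{L}^{o}_{\mathcal{M}(\mathbf{H})||\mathcal{M}(\mathbf{H}')}=\ln(c/0)$, which is not defined; hence $\mathcal{L}$ is not defined and no finite $\epsilon$ can bound the privacy loss. Note that if $\hat{q}^{t}$ were also perturbed, the inversion would recover only a noisy estimate of $W^{t}$, the supports would overlap, and this obstruction would vanish --- which is precisely the role of the ``$\hat{q}^{t}$ not perturbed'' hypothesis.

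The step I expect to be the main obstacle is the first one together with its bookkeeping in the second: one must argue that the quantities feeding $g$ are genuinely determined by what is released (the two consecutive estimates, plus $w^{t}$ and $\hat{q}^{obsv_t}$), handle the adaptive dependence of round-$t$ procurement on earlier estimates, and dispatch the measure-zero locus $\hat{q}^{t+1}=\hat{q}^{t}$ cleanly. Once $W^{t}$ is established as a deterministic functional of the output, the disjoint-support conclusion and the appeal to Eq.~\eqref{eqn::PL} are immediate.
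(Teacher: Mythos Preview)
Your argument works under a literal reading of the ``Given'' list (treating $w^t$ and $\hat{q}^{obsv_t}$ as observable), but it is considerably more elaborate than the paper's proof and leans on information the scheme under scrutiny does not actually release. The setup preceding the claim specifies that agents communicate only the unperturbed estimate $\hat{q}^t$ and the noised total $\Tilde{W}^t$; neither $w^t$ nor $\hat{q}^{obsv_t}$ appears in the transcript, so your inversion recovering $W^t$ is not available to an adversary who sees only what is emitted. The paper instead takes adjacent histories that differ at the \emph{current} round, one with $w^t=0$ and one with $w^t\neq 0$. Since $w^t=0$ forces $\hat{q}^{t+1}=\hat{q}^t$ irrespective of the noise on the totals, observing $\hat{q}^{t+1}\neq\hat{q}^t$ immediately rules out $w^t=0$; the two output distributions have disjoint support and $\mathcal{L}$ is undefined. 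This uses only the two successive released estimates and no inversion at all.

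Two smaller remarks. First, your assertion that the degenerate locus $\hat{q}^{t+1}=\hat{q}^t$ has probability zero is false: $\hat{q}^{obsv_t}$ and $\hat{q}^t$ are rational-valued Bernoulli averages and coincide with positive probability. This does not break your argument---you correctly note it suffices to exhibit \emph{one} output in the non-degenerate set---but the aside should be dropped. Second, if one does grant that $w^t$ is released, the claim is already trivial for adjacency at $t_o=t$ (the released $w^t$ is itself the differing coordinate); choosing $t_o<t$ sidesteps this but at the cost of the extra inversion machinery. The paper's route avoids both issues in two lines.
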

\begin{proof}
If $w^t = 0$, then it follows that $\hat{q}^{t+1} = \hat{q}^t$ irrespective of $\Tilde{W}^t, \Tilde{W}^{t+1}$. So, if it values $\hat{q}^{t+1} \neq \hat{q}^t$ are communicated, other agents can conclude that  $w^{t}$ cannot be zero. This implies that the privacy loss variable $\mathcal{L}$ (Eq.~\ref{eqn::PL}) is not defined as an adversary can distinguish between two procurement histories.  
\end{proof}

    

With Claim~\ref{claim}, we show that $\epsilon$ may not be bounded even after sanitising the sensitive data due to its dependence on other non-private communicated data. This is due to the fact that the local mean estimates are a function of the procurement vectors and the observation vectors. Thus, it becomes insufficient to just perturb the quality estimates. 

We propose that whenever communication happens, only procurement and observation values based on rounds since last communication are shared. Additionally, to communicate weighted quality estimates, we use the Gaussian Noise mechanism to add noise to \textit{both} the procurement values and realisation values. The sensitivity ($\Delta$) for noise sampling is equal to the capacity of the producer-agent pair.


\begin{algorithm}
\small

\setcounter{algorithm}{0}
    \floatname{algorithm}{Procedure}
    \caption{\textsf{CheckandUpdate}($W, \Tilde{w}, Y, \Tilde{y}, \omega_{1}, \omega_{2}, n, t$)}
    \label{procedure}
    \begin{algorithmic}[1]
        \State $\hat{q} \longleftarrow \frac{Y}{W}$ 
        \If{ $\frac{\Tilde{y}}{\Tilde{w}} \in \left[\hat{q} - \omega_1 \sqrt{\frac{3 ln (nt)}{2W}}, \hat{q} + \omega_1 \sqrt{\frac{3 ln (nt)}{2W}}\right]$}
            \State $W \longleftarrow W + \omega_2 \Tilde{w}$
            \State $Y \longleftarrow Y + \omega_2 \Tilde{y}$
        \EndIf
        \State \textbf{return} $W,Y$
    \end{algorithmic}
\end{algorithm}

\subsection{Privacy Budget Allocation}
Since the estimates are more sensitive to noise addition when the sample size is smaller, we propose using monotonically decreasing privacy budget for noise generation. Formally, let total privacy budget be denoted by $\epsilon$ with $(\epsilon^{1},\epsilon^{2},\ldots)$ corresponding to privacy budgets for communication rounds $(1,2,\ldots)$. Then, we have $\epsilon^{1} > \epsilon^{2} > \ldots$. Specifically, we denote $\epsilon^{z}$ as the privacy budget in the $z^{th}$ communication round, where $
    \epsilon^z \longleftarrow \frac{\epsilon}{2 \times \log(T)} + \frac{\epsilon}{2^{z+1}}$.

\begin{algorithm}[!ht]
\small
\setcounter{algorithm}{1}
        \caption{\ouralgo} \label{algo:Multiplayer Private SS-UCB}
        \begin{algorithmic}[1]
            \State \textbf{Inputs :} Total rounds $T$, Quality threshold $\alpha$, $\epsilon$, $\delta$, Cost set $\{\mathbf{c}_{j}\}=\{(c_{i,j})_{i\in [m]}\}$, Capacity set $\{\mathbf{k}_{j}\}=\{(k_{i,j})_{i\in [m]}\}$, Start round $\underline{t}$, Stop round $\overline{t}$
            \LineComment{Initialisation Step}
            \State $t \longleftarrow 0$, $\tau \longleftarrow 1$
            \State $[\forall i \in [m],\forall j \in [n]]$ Initialise total and uncommunicated procurement ($W_{i,j}, w_{i,j}$) and realisations ($Y_{i,j}, y_{i,j}$)
            \While{$t \leq \frac{3 ln (yT)}{2 n \zeta^2}$ \textbf{(Pure Explore Phase)}}
                \For{all the agents $j \in [n]$}
                    \State Pick procurement vector $\mathbf{s}_{j}^{t} = (1)^{m}$ and observe quality realisations $\mathbf{X}_{\mathbf{s}_{j}^{t},j}^{t}$.
                    \State $[\forall i \in [m]]$ Update $W_{i,j}^{t+1}, w_{i,j}^{t+1}, Y_{i,j}^{t+1}, y_{i,j}^{t+1}$ using Eq.~\ref{eq:Y} 
                    \If{ $t \in [\underline{t}, \overline{t}]$ and $t \geq \tau$} \Comment{\textcolor{blue}{Communication round}}
                        \State $[\forall i \in [m]]$ Calculate $\Tilde{w}_{i,j}, \Tilde{y_{i,j}}$ according to Eq. \ref{w private},\ref{y private}
                        \For{each agent $z \in [n] / j$}
                            \State Send $\{ \Tilde{w}_{i,j} ,  \Tilde{y}_{i,j}\}$ to agent $z$
                            \State $[\forall i \in [m]]$ $W_{i,z}^{t+1}, Y_{i,z}^{t+1} \longleftarrow$ \textsf{CheckandUpdate}($W_{i,z}^{t+1}, \Tilde{w}_{i,j}, Y_{i,z}^{t+1}, \Tilde{y}_{i,j},.$) 
                        \EndFor
                        \State $[\forall i \in [m]]$ $w_{i,j}^{t+1} \longleftarrow 0$, $y_{i,j}^{t+1} \longleftarrow 0$
                        \State $\tau \longleftarrow 2 \times \tau$
                    \EndIf
                    \State Update quality estimate
                    \State $t \longleftarrow t+1$
                \EndFor
            \EndWhile
            \While{$t \leq T$, $\forall j \in [n]$ \textbf{(Explore-Exploit Phase)}}
                \State $[\forall i \in [m]]$ Calculate the upper confidence bound of quality estimate, $(\hat{q}_{i,j}^{t}) ^{+}$
                \State Pick procurement vector using $\mathbf{s}_{j}^{t} = \mathbf{Oracle(}(\hat{q}_{i,j}^{t}) ^{+},\mathbf{c}_{j},\mathbf{k}_{j},.\mathbf{)}$ and observe its realisations $\mathbf{X}_{\mathbf{s}_{j}^{t},j}^{t}$.
                \State $[\forall i \in [m]]$ Update $W_{i,j}^{t+1}, w_{i,j}^{t+1}, Y_{i,j}^{t+1}, y_{i,j}^{t+1}$ using Eq.~\ref{eq:Y} 
                    \If{ $t \in [\underline{t}, \overline{t}]$ and $t \geq \tau$} \Comment{\textcolor{blue}{Communication round}}
                        \State $[\forall i \in [m]]$ Calculate $\Tilde{w}_{i,j}, \Tilde{y_{i,j}}$ according to Eq. \ref{w private},\ref{y private}
                        \For{each agent $z \in [n] / j$}
                            \State Send $\{ \Tilde{w}_{i,j} ,  \Tilde{y}_{i,j}\}$ to agent $z$
                            \State $[\forall i \in [m]]$ $W_{i,z}^{t+1}, Y_{i,z}^{t+1} \longleftarrow$ \textsf{CheckandUpdate}($W_{i,z}^{t+1}, \Tilde{w}_{i,j}, Y_{i,z}^{t+1}, \Tilde{y}_{i,j},.$) 
                        \EndFor
                        \State $[\forall i \in [m]]$ $w_{i,j}^{t+1} \longleftarrow 0$, $y_{i,j}^{t+1} \longleftarrow 0$
                        \State $\tau \longleftarrow 2 \times \tau$
                    \EndIf
                \State Update quality estimate
                \State $t \longleftarrow t+1$
            \EndWhile
        \end{algorithmic}
\end{algorithm}


\subsection{\ouralgo: Algorithm}
Based on the feedback from the analysis made in previous subsections, we now present a private federated CMAB algorithm for the heterogeneous setting, namely \ouralgo. Algorithm~\ref{algo:Multiplayer Private SS-UCB} formally presents \ouralgo. Details follow. 

\smallskip
\noindent\textbf{Algorithm~\ref{algo:Multiplayer Private SS-UCB} Outline.} The rounds are split into two phases. During the initial pure exploration phase (Lines 6-22), the agents explore all the producers by procuring evenly from all of them. The length of the pure exploration phase is carried over from the non-private algorithm. In this second phase (Lines 23-38), explore-exploit, the agents calculate the $UCB$ for their quality estimates. Then the Oracle is used to provide a procurement vector based on the cost, capacity, $UCB$ values as well as the quality constraint ($\alpha$). Additionally, the agents communicate their estimates as outlined in Sections~\ref{sub:tradeoff} and \ref{sub:infoLeak}.  The agents update their quality estimates at the end of each round using procurement and observation values (both local and communicated), Lines 19 and 36. 
\begin{equation}\label{eq:Y}
\begin{gathered}
        w_{i,j}^{t+1} \longleftarrow w_{i,j}^{t} + l_{i,j}^{t} ~;~ W_{i,j}^{t+1} \longleftarrow W_{i,j}^{t} + l_{i,j}^{t} \\
        y_{i,j}^{t+1} \longleftarrow y_{i,j}^{t} + x_{i,j}^{t} ~;~ Y_{i,j}^{t+1} \longleftarrow Y_{i,j}^{t} + x_{i,j}^{t} \\
    q_{i,j}^{t+1} \longleftarrow \frac{Y_{i,j}^{t+1}}{W_{i,j}^{t+1}}
\end{gathered}
\end{equation}


\smallskip
\noindent\textbf{Noise Addition.}
From Section \ref{sub:infoLeak}, we perturb both uncommunicated procurement and realization values for each agent-producer pair using the Gaussian Noise mechanism. Formally, let $w_{i,j}^{t}, y_{i,j}^{t}$ be the uncommunicated procurement and realization values. Then $\Tilde{w}_{i,j}, \Tilde{y}_{i,j}$ are communicated, which are calculated using the following privatizer,

\begin{align}
    \Tilde{w}_{i,j} = w_{i,j}^{t} + \mathcal{N}(0,\frac{2 k_{i,j}^{2} \log(1.25 / \delta)}{(\epsilon^{z})^{2}}) \label{w private} \\
    \Tilde{y}_{i,j} = y_{i,j}^{t} + \mathcal{N}(0,\frac{2 k_{i,j}^{2} \log(1.25 / \delta)}{(\epsilon^{z})^{2}}) \label{y private}
\end{align}

where $\epsilon^{z}$ is the privacy budget corresponding to the $z^{th}$ communication round.

\smallskip
\noindent\textbf{What to Learn.} To minimise the regret incurred, we propose that the agents selectively choose what communications to learn from. Weighted confidence bounds around local estimates are used to determine if a communication round should be learned from. Let $\xi_{i,j}^{t} = \sqrt{\frac{3 ln(t)}{2\sum_{z \in \{1,2,\ldots,t\}} l_{i,j}^{z} }}$ denote the confidence interval agent $j$ has w.r.t. local quality estimate of producer $i$. Then, the agents only selects to learn from a communication if $\hat{q}_{i,j}^{t} - \omega_{1} \xi_{i,j}^{t} < q_{(communicated)i,j} < \hat{q}_{i,j}^{t} + \omega_{1} \xi_{i,j}^{t}$ where $\omega_{1}$ is a weight factor and $q_{(communicated)i,j} = \frac{\Tilde{y}_{i,j}}{\Tilde{w}_{i,j}}$.

The local observations are weighed more compared to communicated observations for calculating overall estimates. Specifically, $\omega_{2} \in [0,1]$ is taken as the weighing factor for communicated observations.


\subsection{\ouralgo: $(\epsilon,\delta)$-DP Guarantees}
In each round, we perturb the values being communicated by adding Gaussian noises satisfying $(\epsilon',\delta')$-DP to them. It is a standard practice for providing DP guarantees for group sum queries. Let $\mathcal{M}$ be a randomised mechanism which outputs the sum of values for a database input $d$ using Gaussian noise addition. Since Oracle is deterministic, each communication round can be considered a post-processing of $\mathcal{M}$ whereby subset of procurement history is the the database input. Thus making individual communication rounds satisfy $(\epsilon',\delta')$-DP. 

The distinct subset of procurement histories used in each communication round can be considered as independent DP mechanisms. Using the Basic Composition theorem, we can compute the overall $(\epsilon,\delta)$-DP guarantee. In \ouralgo, we use a target privacy budget, $\epsilon$, to determine the noise parameter $\sigma$ in each round based on Basic composition. Thus, this can be leveraged as a tuning parameter for privacy/regret optimisation. 
\section{Experimental Results}\label{sec:exp}
In this section, we compare \ouralgo\ with non-federated and non-private approaches for the combinatorial bandit (CMAB) setting with constraints. We first explain the experimental setup, then note our observations and analyze the results obtained. 
\input{T}
    \begin{figure}[!ht]
        \begin{subfigure}[b]{0.475\textwidth}
          \centering
        \begin{tikzpicture}
        \begin{axis}[
            width=\textwidth,
            height = 5cm,
            tick scale binop=\times,
            title={\ouralgo: FRR vs. $\epsilon$},
            ytick={0,0.5,1,1.5},
            xlabel={Privacy Budget ($\epsilon$)},
            xlabel near ticks,
            ylabel={FRR},
            ylabel near ticks,
            ylabel near ticks, yticklabel pos=left, 
            ymajorgrids=true,
            grid style=dashed,
            legend style={at={(1,1)},{column sep=0.25cm}},  
            legend entries={\ouralgo,\texttt{FCB}},
            every axis plot/.append style={thick}
        ]
\addplot[
    color=blue,
dotted, every mark/.append style={solid, fill=gray}, mark=square*,mark repeat = 1]
coordinates {
(0.2, 1.71)
(0.3, 1.14)
(0.4, 1.02)
(0.5, 0.95)
(0.6, 0.87)
(0.7, 0.84)
(0.8, 0.79)
(0.9, 0.78)
(1, 0.78)
(1.1, 0.74)};



\end{axis}
\end{tikzpicture}
\label{fig:epsilon_uniform}
\caption{$c{_{ij}},q_{i} \sim U[0,1]$}
\end{subfigure}
\begin{subfigure}[b]{0.475\textwidth}
      \centering
        \begin{tikzpicture}
        \begin{axis}[
            width=\textwidth,
            height = 5cm,
            tick scale binop=\times,
            title={\ouralgo: FRR vs. $\epsilon$},
            ytick={0,0.5,1,1.5},
            xlabel={Privacy Budget ($\epsilon$)},
            xlabel near ticks,
            ylabel={FRR},
            ylabel near ticks,
            ylabel near ticks, yticklabel pos=left, 
            ymajorgrids=true,
            grid style=dashed,
            legend style={at={(1,1)},{column sep=0.25cm}},
            legend entries={\ouralgo,\texttt{FCB}},
            every axis plot/.append style={thick}
        ]

\addplot[
    color=blue,
dotted, every mark/.append style={solid, fill=gray}, mark=square*,mark repeat = 1]
coordinates {
(0.2, 1.3)
(0.3, 0.83)
(0.4, 0.77)
(0.5, 0.70)
(0.6, 0.662)
(0.7, 0.666)
(0.8, 0.660)
(0.9, 0.665)
(1,  0.678)
(1.1, 0.668)};


\end{axis}
\end{tikzpicture}
\label{fig:epsilon_gaussian}
\caption{$c{_{ij}},q_{i} \sim \mathcal{N}(\alpha=0.4,0.2)$ }
\end{subfigure}
    \caption{\texttt{EXP2}: FRR for \ouralgo\ while varying privacy budget $\epsilon$  (with $n=10$, $m=30$, $t=100000$)}
    \label{fig:exp2}
\end{figure}
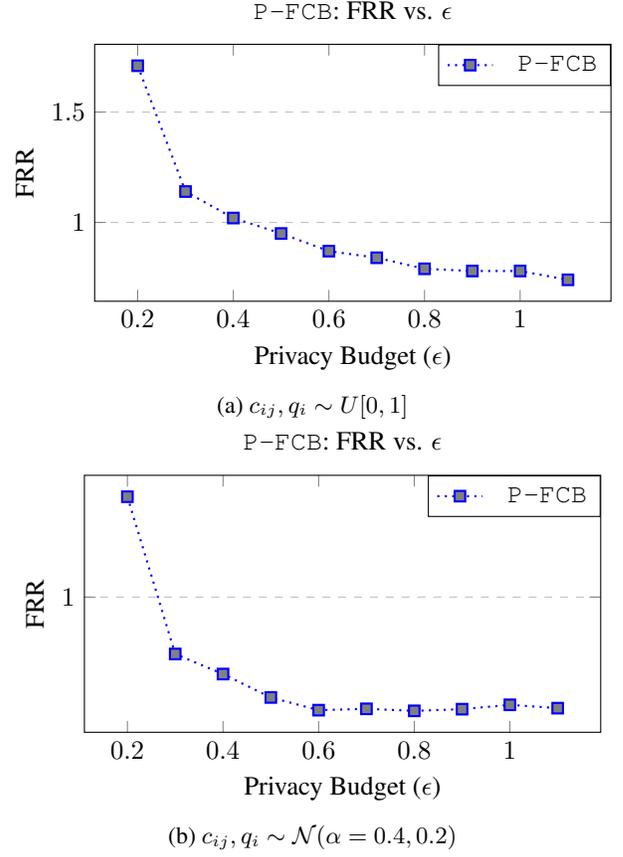
\begin{figure}[t]
    \begin{subfigure}[b]{0.475\textwidth}
          \centering
        \begin{tikzpicture}
        \begin{axis}[
            width=\textwidth,
            height = 5cm,
            title={\ouralgo: Regret vs. $n$},
            xlabel={$\#$ of Agents ($n$)},
            xlabel near ticks,
            ylabel={Average regret per agent},
            ylabel near ticks,
            ylabel near ticks, yticklabel pos=left, 
            ymajorgrids=true,
            grid style=dashed,
            every axis plot/.append style={ thick},
            ymax = 3000,
            legend style={at={(1,1)},{column sep=0.25cm}},  
            legend entries={\ouralgo\ ($\epsilon=1$)}
        ]

\addplot[
    color=purple,
dotted, every mark/.append style={solid, fill=gray}, mark=triangle*,mark repeat = 1]
coordinates {
(10,1976)
(20,1418)
(30,1293)
(40,1040)
};
\end{axis}
\end{tikzpicture}
\caption{$c{_{ij}},q_{i} \sim U[0,1]$}
    \end{subfigure}
    \begin{subfigure}[b]{0.475\textwidth}
          \centering
        \begin{tikzpicture}
        \begin{axis}[
            width=\textwidth,
            height = 5cm,
            title={\ouralgo: Regret vs. $n$},
            xlabel={$\#$ of Agents ($n$)},
            xlabel near ticks,
            ylabel={Average regret per agent},
            ylabel near ticks,
            ylabel near ticks, yticklabel pos=left, 
            ymajorgrids=true,
            grid style=dashed,
            every axis plot/.append style={ thick},
            ymax = 3000,
            legend style={at={(1,1)},{column sep=0.25cm}},
            legend entries={\ouralgo\ ($\epsilon=1$)}
        ]

\addplot[
    color=purple,
dotted, every mark/.append style={solid, fill=gray}, mark=triangle*,mark repeat = 1]
coordinates {
(10,2733)
(20,2579)
(30,2139)
(40,2106)
};
\end{axis}
\end{tikzpicture}
\caption{$c{_{ij}},q_{i} \sim \mathcal{N}(\alpha=0.4,0.2)$}
    \end{subfigure}
    \caption{\texttt{EXP3}: Average regret per agent with \ouralgo\ by varying the number of learners $n$ (with $\epsilon=1$, $t=100000$)}
    \label{fig:learners}
\end{figure}
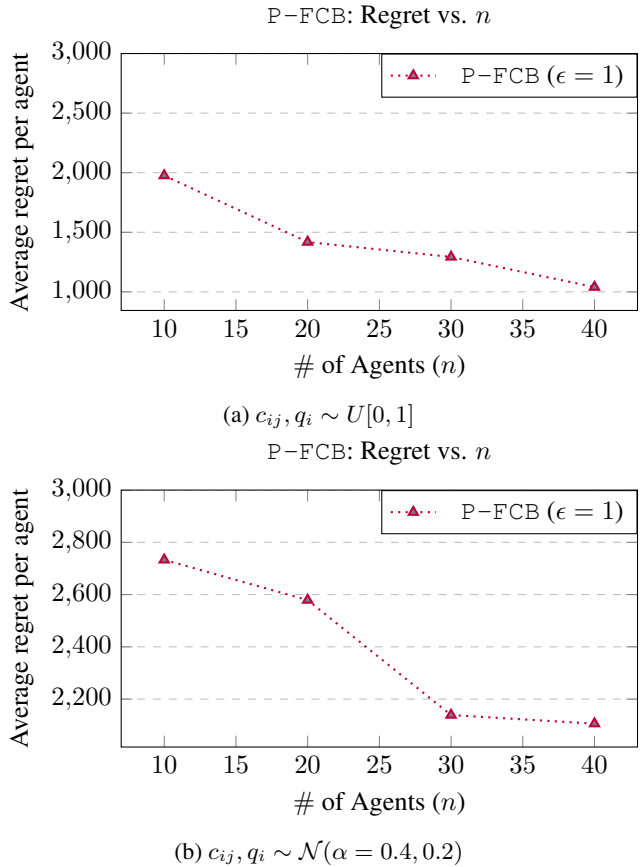


\subsection{Setup}

For our setting, we generate costs and qualities for the producers from: (a) uniform distributions, i.e., $q_{i},c_{ij} \sim U[0,1]$ (b) normal distributions, i.e., $q_{i},c_{ij} \sim \mathcal{N}(\alpha,0)$. For both cases, the capacities are sampled from a uniform distribution, $k_{ij} \sim U[1,50]$. We use the following tuning parameters in our experiments: $\alpha = 0.4$, $\delta = 0.01$ (i.e., $\delta< 1/n$), $\underline{t} = 200$, $\Bar{t} = 40000$, $\omega_{1} = 0.1$, $\omega_{2} = 10$. For our Oracle, we deploy the \textit{Greedy SSA} algorithm presented in Deva et al.~\cite{ayush}. Further, to compare \ouralgo's performance, we construct the following two \emph{non-private} baselines:

\begin{enumerate}
    \item \texttt{Non-Federated}. We use the single agent algorithm for subset selection under constraints proposed in Deva et al.~\cite{ayush}. It follows $UCB$ exploration similar to \ouralgo\ but omits any communication done with other agents.
    \item \texttt{FCB}. This is the non-private variant of \ouralgo. That is, instead of communicating $\Tilde{w}_{ij}$ and $\Tilde{y}_{ij}$, the true values $w_{ij}^{t}$ and $y_{ij}^{t}$ are communicated.  
\end{enumerate}

\noindent We perform the following experiments to measure \ouralgo's performance:

 \begin{itemize}
    \item[$\bullet$] \texttt{EXP1:} For fixed $n=10$, $m=30$, we observe the regret growth over rounds ($t$) and compare it to non-federated and non-private federated settings.
    \item[$\bullet$] \texttt{EXP2:} For fixed $n=10$, $m=30$, we observe $FRR$ (ratio of regret incurred in federated setting compared to non federated setting) at $t=100000$ while varying $\epsilon$ to see the regret variance w.r.t. privacy budget.
    \item[$\bullet$] \texttt{EXP3:} For fixed $\epsilon = 1$, $m=30$, we observe average regret at $t=100000$ for varying $n$ to study the effect of number of communicating agents.
\end{itemize}

For \texttt{EXP1} and \texttt{EXP2}, we generate $5$ instances by sampling costs and quality from both Uniform and Normal distributions. Each instance is simulated $20$ times and we report the corresponding average values across all instances. Likewise for \texttt{EXP3}, instances with same producer quality values are considered with costs and capacities defined for different numbers of learners. For each instance, we average across $20$ simulations.

\subsection{Results}
\begin{itemize}
    \item[$\bullet$] \texttt{EXP1}. \ouralgo\ shows significant improvement in terms of regret (Fig.~\ref{fig:exp1}) at the cost of relatively low privacy budget. Compared to \texttt{FCB}, \ouralgo\ ($\epsilon = 1$) and \texttt{Non-federated} incurs $136\%$,$233\%$ more regret respectively for uniform sampling and $235\%$, $394\%$ more regret respectively for normal sampling. This validates efficacy of \ouralgo.
    
  \item[$\bullet$] \texttt{EXP2}. We study the performance of the algorithm with respect to privacy budget (Fig.~\ref{fig:exp2}). We observe that according to our expectations, the regret decreases as privacy budget is increased. This decrease in regret is sub-linear in terms of increasing $\epsilon$ values. This is because as privacy budget increases, the amount of noise in communicated data decreases.
   
  \item[$\bullet$] \texttt{EXP3}. We see (Fig.~\ref{fig:learners}) an approximately linear decrease in per agent regret as the number of learning agents increases. This reinforces the notion of reduction of regret, suggested in Section~\ref{sec:np}, by engaging in federated learning is valid in a heterogeneous private setting.
   
\end{itemize}

\noindent\underline{Discussion}: Our experiments demonstrate that \ouralgo, through selective learning in a federated setting, is able to achieve a fair regret and privacy trade-off. \ouralgo\ achieves reduction in regret (compared to non-federated setting) for low privacy budgets. 


With regards to hyperparamters, note that lower $\omega_{2}$ suggests tighter bounds while selecting what to learn, implying a higher confidence in usefulness of the communicated data. Thus, larger values for $\omega_{1}$ can be used if $\omega_{2}$ is decreased. In general, our results indicate that it is optimal to maintain the value $\omega_{1}\cdot\omega_{2}$ used in our experiments. Also, the communication start time, should be such that the sampled noise is at-least a magnitude smaller than the accumulated uncommunicated data (e.g., $\underline{t}\approx 200$). This is done to ensure that the noisy data is not detrimental to the learning process. 

The DP-ML literature suggests a privacy budget $\epsilon<1$~\cite{triastcyn2019federated}. From Fig.~\ref{fig:exp2}, we note that \ouralgo\ performs well within this privacy budget. While our results achieve a fair regret and privacy trade-off, in future, one can further fine tune these hyperparameters through additional experimentation and/or theoretical analysis.



\section{Conclusion and Future Work}\label{sec:conclusion}
This paper focuses on learning agents which interact with the same set of producers (``arms") and engage in federated learning while maintaining privacy regarding their procurement strategies. We first looked at a non-private setting where different producers' costs and capacities were the same across all agents and provided theoretical guarantees over optimisation due to federated learning. We then show that extending this to a heterogeneous private setting is non-trivial, and there could be potential information leaks. We propose \ouralgo\, which uses \emph{UCB} based exploration while communicating estimates perturbed using Gaussian method to ensure differential privacy. We defined a communication protocol and a selection learning process using error bounds. This provided a meaningful balance between regret and privacy budget. We empirically showed notable improvement in regret compared to individual learning, even for considerably small privacy budgets. 


Looking at problems where agents do not share exact sets of producers but rather have overlapping subsets of available producers would be an interesting direction to explore. It is also possible to extend our work by providing theoretical upper bounds for regret in a differentially private setting. In general, we believe that the idea of when to learn and when not to learn from others in federated settings should lead to many interesting works.

\bibliographystyle{named}
\bibliography{ijcai22}

\end{document}